\documentclass[conference,10pt]{IEEEtran}
\usepackage{graphicx,color}
\usepackage{amsmath, amsthm, amsfonts, amssymb, amsbsy,nccmath,bbm}
\usepackage{algorithm}
\usepackage{algpseudocode}
\usepackage{enumerate}
\usepackage{lipsum}
\newtheorem{lemma}{Lemma}

\usepackage[sort,compress]{cite}
\usepackage{epsfig}
\usepackage{epstopdf}
\usepackage{mathabx}
\usepackage{mathtools}
\usepackage{dsfont}
\usepackage{epstopdf}

\usepackage[inline]{enumitem}   
\makeatletter
\newcommand{\inlineitem}[1][]{%
\ifnum\enit@type=\tw@
    {\descriptionlabel{#1}}
  \hspace{\labelsep}%
\else
  \ifnum\enit@type=\z@
       \refstepcounter{\@listctr}\fi
    \quad\@itemlabel\hspace{\labelsep}%
\fi}
\makeatother
\newcommand{\beq}{\begin{equation}}
\newcommand{\eeq}{\end{equation}}

\def\adots{\mathinner{\mskip0mu\raise0pt\vbox{\kern7pt\hbox{.}}\mskip3mu
          \raise4pt\hbox{.}\mskip3mu\raise8pt\hbox{.}\mskip0mu}}

\newcommand{\bmc}{{\mathbf c}}

\newcommand{\xh}{\widehat{x}}

\newcommand{\tr}{\mbox{tr}}

\usepackage{bm}

\newcommand{\bmtheta}{{\bm \theta}}

\newcommand{\bme}{{\bm e}}
\newcommand{\bmx}{{\bm x}}
\newcommand{\bmy}{{\bm y}}

\newcommand{\bmH}{{\bm H}}
\newcommand{\bmQ}{{\bm Q}}

\newcommand{\bmxh}{\widehat{\bmx}}

\newcommand{\bmA}{{\bm A}}

\newcommand{\bmC}{{\bf C}}

\newcommand{\bmI}{{\bf {I}}}

\newcommand{\bmm}{{\bf {m}}}

\newcommand{\bma}{{\bm a}}

\newcommand{\bmU}{\bm U}

\newcommand{\thetah}{\widehat{\theta}}

\makeatletter
\newcommand\fs@spaceruled{\def\@fs@cfont{\bfseries}\let\@fs@capt\floatc@ruled
  \def\@fs@pre{\vspace{0.5\baselineskip}\hrule height.8pt depth0pt \kern2pt}%
  \def\@fs@post{\kern1pt\hrule\relax}%
  \def\@fs@mid{\kern2pt\hrule\kern2pt}%
  \let\@fs@iftopcapt\iftrue}
\makeatother

\newcommand{\bit}{\begin{itemize}}
\newcommand{\eit}{\end{itemize}}

\newcommand{\bmu}{\boldsymbol{\mu}}
\newcommand{\bmz}{\mathbf{z}}

\renewcommand{\bmA}{{\mathbf A}}

\newcommand{\bmB}{{\mathbf B}}

\renewcommand{\bmU}{{\mathbf U}}

\newcommand{\bmb}{{\mathbf b}}

\usepackage{lipsum}

\newcommand{\bmXi}{{\boldsymbol \Xi}}

\DeclareMathOperator{\E}{\mathbb{E}}
\newcommand{\var}{\mbox{var}}
\renewcommand{\bmxh}{\widehat{\bmx}}

\renewcommand{\bmu}{\bm u}

\newcommand{\zh}{\widehat{z}}

\newcommand{\bmthetab}{{\overline{\bmtheta}}}

\newcommand{\KLD}{{\mathrm{KLD}}}

\newcommand{\cbN}{{\overline{\mathcal{N}}}}
\newcommand{\cbM}{{\overline{\mathcal{M}}}}
\newcommand{\bmnu}{{\bm{\nu}}}

\newcommand{\bmxc}{{\widecheck{\bmx}}}
\newcommand{\nb}{{\overline{n}}}

\usepackage{subfiles}

\usepackage[acronym]{glossaries}

\newacronym{kld}{KLD}{Kullback–Leibler divergence}
\newacronym{snr}{SNR}{signal-to-noise ratio}
\newacronym{ap}{AP}{access point}

\newtheorem*{remark}{Remark}

\newtheorem{proposition}{Proposition}

\begin{document}
\linespread{0.82}

\title{Avoiding Non-Integrable Beliefs in Expectation Propagation}
\author{%
  \IEEEauthorblockN{Zilu Zhao, Jichao Chen, Dirk Slock}
  \IEEEauthorblockA{
			\small
			Communication Systems Department, EURECOM, France \\	
			zilu.zhao@eurecom.fr, jichao.chen@eurecom.fr, dirk.slock@eurecom.fr
			\vspace{-3mm}		}
	}

\maketitle

\begin{abstract}
Expectation Propagation (EP) is a widely used iterative message-passing algorithm that decomposes a global inference problem into multiple local ones. It approximates marginal distributions as ``beliefs'' using intermediate functions called ``messages''. It has been shown that the stationary points of EP are the same as corresponding constrained Bethe Free Energy (BFE) optimization problem. Therefore, EP is an iterative method of optimizing the constrained BFE. However, the iterative method may fall out of the feasible set of the BFE optimization problem, i.e., the beliefs are not integrable.
In most literature, the authors use various methods to keep all the messages integrable. In most Bayesian estimation problems, limiting the messages to be integrable shrinks the actual feasible set. Furthermore, in extreme cases where the factors are not integrable, making the message itself integrable is not enough to have integrable beliefs. In this paper, two EP frameworks are proposed to ensure that EP has integrable beliefs. Both of the methods allows non-integrable messages. We then investigate the signal recovery problem in Generalized Linear Model (GLM) using our proposed methods.

\vspace{-2mm}
\end{abstract}

\section{Introduction}
\label{Intro}

Signal recovery is a fundamental problem in signal processing, with a wide range of applications.
In the Bayesian framework, however, canonical inference methods such as MMSE and MAP suffer from exponential computational complexity as the problem dimension increases.

Graphical-model-based iterative methods have proven effective by exploiting structural properties of the underlying models \cite{wainwright2008graphical}.
Expectation Propagation (EP), introduced in \cite{minka2005divergence}, transforms a global inference problem into multiple local inference problems.
EP can be interpreted as an iterative procedure for finding the stationary point of the constrained Bethe Free Energy (BFE) \cite{heskes2005approximate, zhang2021unifying}.
Within this framework, EP approximates marginal distributions (beliefs) using functions known as messages, which correspond one-to-one with the Lagrange multipliers in the constrained BFE formulation. The iterative message passing is interpreted as the alternating optimization of the corresponding constrained BFE.

In EP and constrained BFE minimization, beliefs are required to be integrable. Whereas, the messages are allowed to have any form. However, the alternating optimization or EP cannot foresee whether a message update will lead to non-integrable belief.

In most literature \cite{ngo2020multi, fischer2020vamp, rangan2019vector}, the authors restrict the algorithm to integrable messages to avoid non-integrable beliefs. However, such augmentations are suboptimal, since optimal solution may contain non-integrable messages \cite{zhao2025expectations, zhao2026approximating}.

\subsection{Main Contributions}
In \cite{zhao2025expectations}, we investigate the non-integrable beliefs and message in linear model. However, the proposed methods only work for simple factor graph and cannot be extended to more general ones such as Generalized Linear Model (GLM) or bilinear estimation problems. Furthermore, the analytical continuation technique was only used to restrict the messages to be integrable. 

In this paper, we extend the ideas in \cite{zhao2025expectations} and propose two frameworks for EP to avoid non-integrable belief. Both of the methods requires sequential update. GLM is then investigated using our proposed EP frameworks.

\section{Analytic Continuation Expectation Propagation (ACEP)}
Consider a factored distribution
\beq
    p(\bmtheta)\propto \prod_{\alpha} f_{\alpha}(\bmtheta_{\alpha}).
\eeq

We adopt a sequential update scheme to infer the marginal distribution of $p(\bmtheta)$ as $b^{\thetah}_{n}(\theta_n)\simeq \int p(\bmtheta) \prod_{n'\neq n} d\theta_{n'}$. The belief at factor $f_\alpha$ can be computed as
\beq
    b^{f}_{\alpha}(\bmtheta_{\alpha})\propto f_{\alpha}(\bmtheta_{\alpha})\prod_{n\in N(\alpha)}\mu^{\theta\to f}_{\alpha, n}(\theta_n)
\eeq
The marginal belief is 
\beq
    b^{\thetah;f}_{\alpha, n}(\theta_n)=\int b^{f}_{\alpha}(\bmtheta_{\alpha}) \prod_{n'\in N(\alpha)/\{n\}}d\theta_{n'}.
\eeq
It is then projected to Gaussian distribution by KLD
\beq
    \min_{\mu^{f\to \theta}_{\alpha, n}(\theta_n)\in \mathcal F_n}\KLD[b^{\thetah;f}_{\alpha, n}(\theta_n)\| \frac{1}{Z}\mu^{f\to \theta}_{\alpha, n}(\theta_n)\mu^{\theta\to f}_{\alpha, n}(\theta_n)],
\eeq
where $\mathcal{F}_n$ ensures that $b^{\thetah}_n(\theta_n)$ is integrable.

The belief at the variable node is
\beq
    b^{\thetah}_{n}(\theta_n)\propto \prod_{\alpha\in N(n)}\mu^{f\to \theta}_{\alpha, n}(\theta_n)
\eeq
It is then projected to Gaussian by KLD
\beq
    \min_{\mu^{\theta\to f}_{\alpha, n}(\theta_n)\in \mathcal{F_{\alpha}}} \KLD[b^{\thetah}_{n}(\theta_n)\|\frac{1}{Z}\mu^{f\to \theta}_{\alpha, n}(\theta_n)\mu^{\theta\to f}_{\alpha, n}(\theta_n)],
\eeq
where the domain $\mathcal F_{\alpha}$ ensures that the belief $b^{f}_{\alpha}(\bmtheta_{\alpha})$ is integrable.

\subsection{Constrained KLD Optimization in Gaussian Projection}
\label{sect:constrainedKLDopt}
Gaussian projection is commonly used in EP. 
When using Gaussian projection, a common problem is ``negative variance'', which may lead to non-integrable belief.
 non-integrable messages are acceptable, while non-integrable belief prevents the algorithm from proceeding.
We define unnormalized Gaussian
\beq
    \cbN(\bmtheta|\bmm, \bmC)=\exp[-(\bmtheta-\bmm)^H\bmC^{-1}(\bmtheta-\bmm)].
\eeq
Sometimes, it is more convenient to use natural parameters, we define unnormalized Gaussian with natural parameter to be
\beq
    \cbM(\bmtheta|\bmnu, \bmXi)=\exp[-(\bmtheta-\bmXi^{-1}\bmnu)^{H}\bmXi(\bmtheta-\bmXi^{-1}\bmnu)].
\eeq

Therefore, we have
\beq
    [(\bmnu, \bmXi)=(\bmC^{-1}\bmm, \bmC^{-1})]\Leftrightarrow[\cbN(\bmtheta|\bmm, \bmC)=\cbM(\bmtheta|\bmnu, \bmXi)]
\eeq

We consider the constrained projection step
\beq
\begin{split}
    \min_{\mu_{p}(\theta)} \quad &\KLD[b(\theta)\|\frac{1}{Z}\mu_{r}(\theta)\mu_{p}(\theta)] \\
    \text{s.t.} \quad & \mu_{p}\in \mathcal{F},
\end{split}
\eeq
where $\mathcal{F}$ is a subset of unnormalized Gaussian. We often only have constraint on the variance (or precision) and the above problem becomes
\beq
\begin{split}
    \min_{\nu_{p}, \xi_{p}} \quad &\KLD[b(\theta)\|\frac{1}{Z(\nu_p, \xi_{p})}\mu_{r}(\theta)\cbM(\theta|\nu_p, \xi_p)] \\
    \text{s.t.} \quad & \xi_p>\gamma,
\end{split}
\label{eq:spawc2613}
\eeq

We define 
\beq
    m_{p}=\xi_{p}\nu_{p}
\eeq
and thus,
\beq
\cbM(\theta|\nu_p, \xi_p)=\exp[-(\theta-m_p)^{T}\xi_{p}(\theta-m_{p})].
\eeq
Denote 
\beq
\begin{split}
    m_{\thetah}=\E_{b}[\theta]\\
    \tau_{\thetah}=\var_{b}[\theta].
\end{split}
\eeq
For simplicity, denote 
\beq
    D=2\cdot \KLD[b(\theta)\|\frac{1}{Z(\nu_p, \xi_{p})}\mu_{r}(\theta)\cbM(\theta|\nu_p, \xi_p)]
\eeq
to remove the constant scale.
Expand the KLD in \eqref{eq:spawc2613}, 
\beq
\begin{split}
    D=-\log[\det(\xi_{p}+\xi_{r})]+\log(2\pi)\\
    +\tr[(\xi_{p}+\xi_{r})\tau_{\thetah}]
    +[m_{\thetah}-(\xi_{p}+\xi_{r})^{-1}(\nu_{p}+\nu_{r})]^T\\
    \cdot(\xi_{p}+\xi_{r})[m_{\thetah}-(\xi_{p}+\xi_{r})^{-1}(\nu_{p}+\nu_{r})]
\end{split}
\eeq
We set the derivative w.r.t. $\nu_{p}$ to zero, and we have
\beq
    \begin{split}
        (\xi_{p}+\xi_{r})^{-1}(\nu_{p}+\nu_{r})=m_{\thetah}\\
        \Rightarrow \nu_p=(\xi_{p}+\xi_{r})m_{\thetah}-\nu_r
    \end{split}
    \label{eq:spawc2618}
\eeq
Substitute the result into the KLD, and we have
\beq
\begin{split}
    D_{|\nu_p=(\xi_{p}+\xi_{r})m_{\thetah}-\nu_r}=-\log[\det(\xi_{p}+\xi_{r})]+\log(2\pi)\\
    +\tr[(\xi_{p}+\xi_{r})\tau_{\thetah}]
\end{split}
\label{eq:spawc2619}
\eeq
We can verify that \eqref{eq:spawc2619} is convex even in multi-variate ($\theta$ is vector, $\xi_{p}$, $\xi_{r}$, $\tau_{\thetah}$ are matrices) cases. In multi-variate EP, the constraint in \eqref{eq:spawc2613} becomes an intersection of positive-definite constraints, which forms an intersection of cones. Thus, the feasible set is also convex. Therefore, the constrained optimization of \eqref{eq:spawc2619} is a convex optimization problem. We will only consider univariate case here, since multivariate does not change the convexity.

Set the derivative of \eqref{eq:spawc2619} w.r.t. $\xi_{p}$, and take the constraint into consideration
\beq
    \xi_{p}=\max\{\frac{1}{\tau_{\thetah}}-\xi_{r}, \gamma\}.
    \label{eq:spawc2620}
\eeq
After that, $\nu_{p}$ can be updated by substituting \eqref{eq:spawc2620} into \eqref{eq:spawc2618}. Since updating one message will only affect the belief at either a single factor or variable node, this modification won't increase the complexity order and it ensures that all the beliefs are integrable during each step.

\section{EP with Checking (EPC)}
Here we do not check if the beliefs are integrable. Instead, we skip the update of the messages from the node if the belief at the node is not integrable.

The belief at factor $f_\alpha$ can be computed as
\beq
    b^{f}_{\alpha}(\bmtheta_{\alpha})\propto f_{\alpha}(\bmtheta_{\alpha})\prod_{n\in N(\alpha)}\mu^{\theta\to f}_{\alpha, n}(\theta_n)
\eeq
If this belief is not integrable, we just skip the message update from this node.
The marginal belief is 
\beq
    b^{\thetah;f}_{\alpha, n}(\theta_n)=\int b^{f}_{\alpha}(\bmtheta_{\alpha}) \prod_{n'\in N(\alpha)/\{n\}}d\theta_{n'}.
\eeq
It is then projected to Gaussian distribution by KLD
\beq
    \min_{\mu^{f\to \theta}_{\alpha, n}(\theta_n)\in \cbM}\KLD[b^{\thetah;f}_{\alpha, n}(\theta_n)\| \frac{1}{Z}\mu^{f\to \theta}_{\alpha, n}(\theta_n)\mu^{\theta\to f}_{\alpha, n}(\theta_n)],
    \label{eq:spawc2623}
\eeq
where ${\cbM}$ denote the unnormalized Gaussian.

The belief at the variable node is
\beq
    b^{\thetah}_{n}(\theta_n)\propto \prod_{\alpha\in N(n)}\mu^{f\to \theta}_{\alpha, n}(\theta_n)
\eeq

The message from the variable is updated as
\beq
    \mu^{\theta\to f}_{\alpha, n}(\theta_n)\propto \frac{b^{\theta}_{n}(\theta_n)}{\mu^{f\to \theta}_{\alpha, n}(\theta_n)}.
\eeq

We can verify that the belief at variable node is always integrable. In fact, we can rewrite the belief at variable node as
\beq
    b^{\thetah}_{n}(\theta_n)\propto \mu^{\theta\to f}_{\alpha, n}(\theta_n) \mu^{f\to \theta}_{\alpha, n}(\theta_n),
    \label{eq:spawc2626}
\eeq
which coincide with the second parameter in \eqref{eq:spawc2623}. Since we use unnormalized Gaussian family in the KLD projection, after updating the message $\mu^{f\to \theta}_{\alpha, n}(\theta_n)$ based on \eqref{eq:spawc2623}, the belief computed by \eqref{eq:spawc2626} is a Gaussian with the same mean and variance as $b^{\theta;f}_{\alpha, n}(\theta_n)$.

\section{System Model for Generalized Linear Model}
We consider a generalized linear model (GLM), the unknown input $\bmx$ follows 
\beq
\bmx\sim p(\bmx)=\prod_{k=1}^K p(x_k).
\eeq
The observed output follows
\beq
    \bmy\sim p(\bmy|\bmA^T\bmx)=\prod_{n=1:N}p(y_n|\bma_{n}^T\bmx).
\eeq

We introduce an auxiliary variable $\bmz=\bmA^T\bmx$. The joint PDF can be written as
\beq
\begin{split}
    p(\bmy, \bmz, \bmx)=p(\bmy|\bmz)p(\bmz|\bmx)p(\bmx)\\
    =\prod_{n}f_{y_{n}}(z_n) \cdot f_{\bmz}(\bmz, \bmx)
    \cdot \prod_{k}f_{x_{k}}(x_{k})
\end{split}
\eeq

where for simplicity, we denote
\begin{align}
    f_{y_{n}}(z_n)=p(y_{n}|z_{n})\\
    f_{\bmz}(\bmz, \bmx)=\delta(\bmz-\bmA\bmx)\\
    f_{x_{k}}(x_{k})=p(x_k)
\end{align}

\section{Derivation for GLM}

\subsection{$f_{y_n}\to z_n$}
The belief at $f_{y_{n}}$ is
\beq
    b^{f_{y}}_{n}(z_{n})\propto f_{y_{n}}(z_{n})\mu^{z\to f_{y}}_{n}(z_{n}).
\eeq
In ACEP, the above belief guarantees to be integrable. Meanwhile in EPC, the update of messages from the above belief is skipped if the above belief is not integrable. We assume it is integrable. The mean and variance are computed as
\beq
\begin{split}
    m^{\zh;f_{y}}_{n}=\E_{b^{f_y}_{n}}[z_n]\\
    \tau^{\zh;f_{y}}_{n}=\var_{b^{f_{y}}_{n}}[z_n].
\end{split}
\eeq
Next, we compute the threshold precision $\gamma^{f_{y}\to z}_{n}$. We only need to compute the threshold in ACEP when $\xi^{z\to f_{y}}_{n}$ is updated with the threshold value. Since the belief at the variable $z_n$ is required to be integrable, we have
\beq
    \gamma^{f_y\to z}_{n}+\xi^{f_{\bmz}\to z}_{n}>0\Rightarrow \gamma^{f_{y}\to z}_{n}>-\xi^{f_{\bmz}\to z}_{n}.
\eeq
Therefore, we set 
\beq
    \gamma^{f_{y}\to z}_{n}=-\xi^{f_{\bmz}\to z}_{n}+\epsilon,
\eeq
where $\epsilon$ is a small positive number. Depending on which method is used, we have different ways of updating the message precision:
\begin{itemize}
    \item If ACEP is used, we have
    \beq
        \xi^{f_{y}\to z}_{n}=\max\{\frac{1}{\tau^{\zh;f_{y}}_{n}}-\xi^{z\to f_{y}}_{n}, \gamma^{f_{y}\to z}_{n}\}.
    \eeq
    \item If EPC is used, we have
    \beq
        \xi^{f_{y}\to z}_{n}=\frac{1}{\tau^{\zh;f_{y}}_{n}}-\xi^{z\to f_{y}}_{n}.
    \eeq
\end{itemize}

After that, we update the other parameter as
\beq
    \nu^{f_{y}\to z}_{n}=(\xi^{f_{y}\to z}_{n}+\xi^{z\to f_{y}}_{n})m^{\zh;f_{y}}_{n}-\nu^{z\to f_{y}}_{n}.
\eeq

\subsection{$f_{x_k}\to x_k$}
The belief at $f_{x_k}$ is
\beq
    b^{f_{x}}_{k}(x_{k})\propto f_{x_{k}}(x_{k})\mu^{x\to f_{x}}_{k}(x_{k}).
\eeq
Its mean and variance are computed as
\beq
\begin{split}
    m^{\xh;f_{x}}_{k}=\E_{b^{f_{x}}_{k}}[x_k]\\
    \tau^{\xh;f_{x}}_{k}=\var_{b^{f_{x}}_{k}}[x_k].
\end{split}
\eeq
The threshold $\gamma^{f_{x}\to x}_{k}$ can be obtained by investigating $b^{\xh}_{k}(x_{k})$,
\beq
\begin{split}
    \gamma^{f_{x}\to x}_k+\xi^{f_{\bmz}\to x}_k>0\\
    \Rightarrow \gamma^{f_{x}\to x}_k=-\xi^{f_{\bmz}\to x}_k+\epsilon
\end{split}
\eeq
After that, we can obtain the update for message $\mu^{f_{x}\to x}_{k}$.

\subsection{$f_{\bmz}\to x_k$}
We look at the belief at $f_{\bmz}$
\beq
    b^{f_{\bmz}}(\bmz, \bmx)\propto \delta(\bmz-\bmA\bmx)\prod_{n}\mu^{z\to f_{\bmz}}_{n}(z_n)\prod_{k}\mu^{x\to f_{\bmz}}_{k}(x_k).
\eeq
Marginalize it over $\bmz$ results to
\beq
    \int b^{f_{\bmz}}(\bmz, \bmx) d\bmz\propto \cbN(\bmx|\bmm^{\bmxh;f_{\bmz}}, \bmC^{\bmxh;f_{\bmz}}),
\eeq
where
\beq
    \begin{split}
        \bmC^{\bmxh;f_{\bmz}}=[\bmA^{T}(\bmC^{\bmz\to f_{\bmz}})^{-1}\bmA+(\bmC^{\bmx\to f_{\bmz}})^{-1}]^{-1}\\
        \bmm^{\bmxh;f_{\bmz}}=\bmC^{\bmxh;f_{\bmz}}[\bmA^{T}(\bmC^{\bmz\to f_{\bmz}})^{-1}\bmm^{\bmz\to f_{\bmz}}+(\bmC^{\bmx\to f_{\bmz}})^{-1}\bmm^{\bmx\to f_{\bmz}}]
    \end{split}
\eeq
Denote $\bme_k$ as a unit vector whose $k$-th entry is one, and the marginalized belief is
\beq
    b^{\xh;f_{\bmz}}_{k}(x_k)=\int \int b^{f_{\bmz}}(\bmz, \bmx) d\bmz \prod_{k'\neq k}dx_{k'}=\cbN(x_k|m^{\xh;f_{\bmz}}_{k}, \tau^{\xh;f_{\bmz}}_{k}),
\eeq
where 
\beq
\begin{split}
    \tau^{\xh;f_{\bmz}}_{k}=\bme_{k}^T\bmC^{\bmxh;f_{\bmz}}\bme_{k}\\
    m^{\xh;f_{\bmz}}_{k}=\bme_{k}^T\bmm^{\bmxh;f_{\bmz}}.
\end{split}
\label{eq:spawc2648p}
\eeq
To derive the threshold $\gamma^{f_{\bmz}\to x}_{k}$, we look at the belief at $x_k$:
\beq
\begin{split}
    \gamma^{f_{\bmz}\to x}_{k}+\xi^{f_{x}\to x}_{k}>0\\
    \Rightarrow\gamma^{f_{\bmz}\to x}_{k}=-\xi^{f_{x}\to x}_{k}+\epsilon.
\end{split}
\eeq
From this point, we can compute the feed back message by either of the methods:
\beq
    \mu^{f_{\bmz}\to x}_{k}(x_k)=\cbM(x_k|\nu^{f_{\bmz}\to x}_{k}, \xi^{f_{\bmz}\to x}_{k}).
\eeq

\subsection{$f_{\bmz}\to z_n$}
We look at the belief at $f_{\bmz}$.
\beq
    b^{f_{\bmz}}(\bmz, \bmx)\propto \delta(\bmz-\bmA\bmx)\prod_{n}\mu^{z\to f_{\bmz}}_{n}(z_n)\prod_{k}\mu^{x\to f_{\bmz}}_{k}(x_k)
\eeq
We investigate the marginal belief of $z_{n}$ first,
\beq
    b^{\zh;f_{\bmz}}_{n}(z_{n})\propto \int \int b^{f_{\bmz}}(\bmz, \bmx) \prod_{n'\neq n}d z_{n'} d\bmx.
\eeq
Use Lemma \ref{lemma:spawc261} to integrate $b^{f_{\bmz}}(z_n, \bmx)d\bmx$ on the second line of \eqref{eq:spawc2643}:
\beq
\begin{split}
    b^{\zh;f_{\bmz}}_{n}(z_{n})\propto \cbN(\bmm^{\bmz\to f_{\bmz}}_{\nb}|\bmA_{\nb}\bmm^{\bmx\to f_{\bmz}}, \bmC^{\bmz\to f_{\bmz}}_{\nb}+\bmA_{\nb}\bmC^{\bmx\to f_{\bmz}}\bmA_{\nb}^T)\\
    \cdot\int \delta(z_{n}-\bma_{n}^T\bmx) \cbN(z_{n}|m^{z\to f_{\bmz}}_{n}, \tau^{z\to f_{\bmz}}_{n})\cbN(\bmx|\bmm^{\bmxc}_{n}, \bmC^{\bmxc}_{n})d\bmx\\
    \propto \cbN(z_n|\bma_{n}^T\bmm^{\bmxh;f_{\bmz}}, \bma_{n}^T\bmC^{\bmxh;f_{\bmz}}\bma_{n})
\end{split}
\label{eq:spawc2643}
\eeq
where
\beq
\begin{split}
    \bmC^{\bmxc}_{n}=[\bmA_{\nb}^T(\bmC^{\bmz\to f_{\bmz}}_{\nb})^{-1}\bmA_{\nb}+(\bmC^{\bmx\to f_{\bmz}})^{-1}]^{-1}.\\
    \bmm^{\bmxc}_{n}=\bmC^{\bmxc}_{n}[\bmA_{\nb}^T(\bmC^{\bmz\to f_{\bmz}}_{\nb})^{-1}\bmm^{\bmz\to f_{\bmz}}_{\nb}+(\bmC^{\bmx\to f_{\bmz}})^{-1}\bmm^{\bmx\to f_{\bmz}}]
\end{split}
\eeq

We also have
\beq
    \bma_{n}^T\bmC^{\bmxh;f_{\bmz}}\bma_{n}=\bme_{n}^T[(\bmA\bmC^{\bmx\to f_{\bmz}}\bmA^T)^{-1}+(\bmC^{\bmz\to f_{\bmz}})^{-1}]^{-1}\bme_{n}
\eeq
if the inversion exists.

An interesting relation is $\forall \bmnu$: 
\beq
    \bma_{n}^T\bmC^{\bmxc}_{n}\bmnu=\tau^{z\to f_{\bmz}}_{n}(\bma_{n}^T\bmC^{\bmxh;f_{\bmz}}\bma_{n}-\tau^{z\to  f_{\bmz}}_{n})^{-1}\bma_n^T\bmC^{\bmxh;f_{\bmz}}\bmnu
\eeq

In fact, $\cbN(\bmx|\bmm^{\bmxc}_{n}, \bmC^{\bmxc}_{n})$ is an approximation of $p(\bmx|\forall n'\neq n:y_{n'})$.

The mean and variance of the  marginal belief of $z_{n}$ at $f_{\bmz}$ is 
\beq
\begin{split}
    m^{\zh; f_{\bmz}}_{n}=\E_{b^{\zh; f\bmz}_{n}}[z_{n}]=\bma_{n}^T\bmm^{\bmxh;f_{\bmz}}\\
    \tau^{\zh;f_{\bmz}}_{n}=\var_{b^{\zh; f\bmz}_{n}}[z_{n}]=\bma_{n}^T\bmC^{\bmxh;f_{\bmz}}\bma_{n}
\end{split}
\label{eq:spawc2647}
\eeq
To derive the threshold, we observe the destination node of message $\mu^{f_{\bmz}\to z}_{n}$. In order to make the belief at $b^{\zh}_{n}(z_n)$ integrable, we have
\beq
\begin{split}
    \xi^{f_{y}\to z}_{n}+\gamma^{f_{\bmz}\to z}_{n}>0\\
    \gamma^{f_{\bmz}\to z}_{n}=-\xi^{f_{y}\to z}_{n}+\epsilon
\end{split}
\label{eq:spawc2648}
\eeq
Based on \eqref{eq:spawc2647} and \eqref{eq:spawc2648}, we can update the message $\mu^{f_{\bmz}\to z}_{n}(z_n)$ based on the EP algorithm discussed before.

\subsection{$z\to f_{y}$}
The belief at $z_n$ is
\beq
    b^{\zh}_{n}(z_n)=\mathcal{N}(z_n|m^{\zh}_{n}, \tau^{\zh}_{n})\propto \mu^{f_{\bmz}\to z}_{n}(z_n)\mu^{f_{y}\to z}_{n}(z_n),
\eeq
where
\beq
    \begin{split}
        \tau^{\zh}_{n}=(\xi^{f_{\bmz}\to z}_{n}+\xi^{f_{y}\to z}_{n})^{-1}\\
        m^{\zh}_{n}=\tau^{\zh}_{n}(\nu^{f_{\bmz}\to z}_{n}+\nu^{f_{y}\to z}_{n}).
    \end{split}
    \label{eq:spawc2660}
\eeq
We then determine the threshold precision $\gamma^{z\to f_{y}}_{n}$ based on the factor $f_{y_n}(z_n)$. For example, if $f_{y_n}(z_n)$ is proportional to a Gaussian Mixture Model, the sum of $\gamma^{z\to f_{y}}_{n}$ and the precision of the smallest GMM elements should be greater than zero.

\subsection{$z\to f_{\bmz}$}
We need to determine the threshold precision $\gamma^{z\to f_{\bmz}}_{n}$. The update of one message correspond to a rank-one update of the covariance matrix $\bmC^{\bmxh;f_{\bmz}}$
\beq
    (\bmC^{\bmxh;f_{\bmz}})^{-1}+\bma_{n}(\gamma^{z\to f_{\bmz}}_{n}-\xi^{z\to f_{\bmz}}_{n})\bma_{n}\succ 0,
\eeq
where $\xi^{z\to f_{z}}_{n}$ is the message variance before updating and $\bmC^{\bmxh;f_{\bmz}}$ is the covariance matrix before updating.

From Lemma \ref{lemma:spawc262}, the positive definite condition is equivalent to
\beq
    \gamma^{z\to f_{\bmz}}_{n}>\xi^{z\to f_{\bmz}}_{n}-(\bma_{n}^T\bmC^{\bmxh;f_{\bmz}}\bma_{n})^{-1}=\xi^{z\to f_{\bmz}}_{n}-\xi^{\zh;f_{\bmz}}_{n}.
    \label{eq:spawc2662p}
\eeq

\begin{proposition}
\label{prop:spawc261}
    In ACEP, with proper initialization (e.g. all positive message precision), if the update of $\mu^{f_{\bmz}\to z}_{n}$, $\mu^{f_{y}\to z}_{n}$, $\mu^{z\to f_{y}}_{n}$ and $\mu^{z\to f_{\bmz}}_{n}$ follows the following order:
    \beq
        \mu^{f_{\bmz}\to z}_{n} \rightarrow \mu^{z\to f_{y}}_{n} \rightarrow \mu^{f_{y}\to z}_{n}\rightarrow \mu^{z\to f_{\bmz}}_{n},
        \label{eq:spawc2664}
    \eeq
    the update of $\mu^{f_{\bmz}\to z}_{n}$ and $\mu^{z\to f_{\bmz}}_{n}$ does not need to evaluate the threshold.
\end{proposition}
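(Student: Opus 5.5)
The plan is an induction over one sweep of the schedule \eqref{eq:spawc2664}. The invariant is that all beliefs are integrable before the sweep: $b^{f_{\bmz}}$ has a positive definite $\bmC^{\bmxh;f_{\bmz}}$, $b^{\zh}_n$ has $\xi^{f_{\bmz}\to z}_n+\xi^{f_y\to z}_n>0$, and $b^{f_y}_n$ is integrable. The all-positive initialization gives this invariant at the start. ACEP keeps it after every step, since each projection step enforces its own threshold. So it suffices to show that for the two updates in question the unconstrained stationary point \eqref{eq:spawc2620} already satisfies the threshold. Then the $\max$ in \eqref{eq:spawc2620} is inactive and nothing has to be computed.

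\emph{Step 1: the update of $\mu^{f_{\bmz}\to z}_n$.} The unconstrained value is $\xi^{f_{\bmz}\to z}_n=1/\tau^{\zh;f_{\bmz}}_n-\xi^{z\to f_{\bmz}}_n$, with $\tau^{\zh;f_{\bmz}}_n$ given by \eqref{eq:spawc2647}. The threshold \eqref{eq:spawc2648} requires $\xi^{f_{\bmz}\to z}_n+\xi^{f_y\to z}_n>0$. I will use the EP consistency that holds just before this step. The previous update of $\mu^{z\to f_{\bmz}}_n$ in the last sweep was the last step of the schedule, so it was the unconstrained division $\mu^{z\to f_{\bmz}}_n\propto b^{\zh}_n/\mu^{f_{\bmz}\to z}_n$, which gives $\xi^{z\to f_{\bmz}}_n=\xi^{f_y\to z}_n$. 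For the first sweep I will use the initialization instead. Then $\xi^{f_{\bmz}\to z}_n+\xi^{f_y\to z}_n=1/\tau^{\zh;f_{\bmz}}_n$. This is positive because $\tau^{\zh;f_{\bmz}}_n=\bma_n^T\bmC^{\bmxh;f_{\bmz}}\bma_n>0$ by positive definiteness of $\bmC^{\bmxh;f_{\bmz}}$. The new belief $b^{\zh}_n$ is then exactly the Gaussian moment-match of $b^{\zh;f_{\bmz}}_n$.

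\emph{Step 2: the intermediate updates.} The update of $\mu^{z\to f_y}_n$ and then of $\mu^{f_y\to z}_n$ may activate their thresholds; ACEP handles these. I only need to record what they leave behind. The belief $b^{\zh}_n$ is integrable, and $\mu^{z\to f_{\bmz}}_n$ is unchanged, so the factor belief $b^{f_{\bmz}}$ is unchanged. The message $\mu^{f_{\bmz}\to z}_n$ is also unchanged.

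\emph{Step 3: the update of $\mu^{z\to f_{\bmz}}_n$.} This is the main obstacle. The unconstrained update is the division $\xi^{z\to f_{\bmz}}_{n,\mathrm{new}}=\xi^{\zh}_n-\xi^{f_{\bmz}\to z}_n=\xi^{f_y\to z}_n$. By \eqref{eq:spawc2662p} the threshold is
\[
\xi^{z\to f_{\bmz}}_{n,\mathrm{new}}>\xi^{z\to f_{\bmz}}_{n}-\xi^{\zh;f_{\bmz}}_{n}.
\]
Here $\xi^{z\to f_{\bmz}}_n$ and $\xi^{\zh;f_{\bmz}}_n$ are evaluated with the current $b^{f_{\bmz}}$, which Step 2 showed has not changed since Step 1. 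Step 1 gives $\xi^{\zh;f_{\bmz}}_n=\xi^{f_{\bmz}\to z}_n+\xi^{z\to f_{\bmz}}_n$. So the condition reduces to $\xi^{f_y\to z}_n+\xi^{f_{\bmz}\to z}_n>0$, which is exactly integrability of $b^{\zh}_n$, guaranteed after Step 2.

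The delicate point is to check that the quantities entering \eqref{eq:spawc2662p} are indeed those fixed in Step 1. This requires that no other message incident to $f_{\bmz}$ is updated between the steps of \eqref{eq:spawc2664}; I take this as part of the sequential schedule. With that, the identity linking $\xi^{\zh;f_{\bmz}}_n$ to the message precisions makes both threshold conditions automatic. The same argument then closes the induction for the next sweep, including the next $\mu^{f_{\bmz}\to z}_n$ update.
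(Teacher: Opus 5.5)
Your proposal is correct and is essentially the paper's proof. It uses the same induction, with invariant $\xi^{z\to f_{\bmz}}_n=\xi^{f_y\to z}_n$ left by the unconstrained last step of the previous sweep, so that the first update gives $\xi^{f_{\bmz}\to z}_n+\xi^{f_y\to z}_n=1/(\bma_n^T\bmC^{\bmxh;f_{\bmz}}\bma_n)>0$; and it reduces the rank-one (Lemma~\ref{lemma:spawc262}) threshold for the last update to integrability of $b^{\zh}_n$, which the $\mu^{f_y\to z}_n$ step already enforces. Beyond the paper, you make explicit that no other message into $f_{\bmz}$ may be updated inside the chain, and you flag the base case; your base case is only gestured at, and is cleanest if the initialization itself has $\mu^{z\to f_{\bmz}}_n=\mu^{f_y\to z}_n$, as the paper's hypothesis at $t=1$ tacitly assumes.
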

\begin{proof}
    We can prove this by mathematical induction. We use superscript $t$ to denote the iteration number. Assume the proposition hold till $t-1$ iteration of \eqref{eq:spawc2664}. 
    By the induction hypothesis, $(\mu^{f_{y}\to z}_{n})^{(t-1)}= (\mu^{z\to f_{\bmz}}_{n})^{(t-1)}$. Therefore, $\xi^{f_{\bmz}\to z}_{n}$ is updated by
    \beq
    \begin{split}
        (\xi^{f_{\bmz}\to z}_{n})^{(t)}+(\xi^{z\to f_{\bmz}}_{n})^{(t-1)}=\frac{1}{\bma_{n}^T(\bmC^{\bmxh;f_{\bmz}})^{(t-1)}\bma_{n}}\\
        =(\xi^{f_{\bmz}\to z}_{n})^{(t)}+(\mu^{f_{y}\to z}_{n})^{(t-1)}>0,
    \end{split}
    \label{eq:spawc2665}
    \eeq
    where we denote the covariance matrix at belief $b^{f_{\bmz}}$ updated just before \eqref{eq:spawc2664} as $(\bmC^{\bmxh;f_{\bmz}})^{(t-1)}$, and the one after the chain \eqref{eq:spawc2664} as $(\bmC^{\bmxh;f_{\bmz}})^{(t)}$. The second line of \eqref{eq:spawc2665} is a result of induction hypothesis and Lemma \ref{lemma:spawc262}.

    Since \eqref{eq:spawc2665} is greater than zero, $(\xi^{f_{\bmz}\to z}_{n})^{(t)}$ meets the threshold automatically. 
    
    Based on the update rule for $\mu^{f_{y}\to z}_{n}$, 
    \beq
        (\xi^{f_{y}\to z}_{n})^{(t)}+(\xi^{f_{\bmz}\to z}_{n})^{(t)}>0.
        \label{eq:spawc2656}
    \eeq
    Now we investigate the constraint for $(\xi^{z\to f_{\bmz}}_{n})^{(t)}$. We look at $(\bmC^{\bmxh;f_{\bmz}})^{(t)}$ just after the update chain $\eqref{eq:spawc2664}$:
    \beq
    \begin{split}
        (\bmC^{\bmxh;f_{\bmz}})^{(t)}\\
        =([(\bmC^{\bmxh;f_{\bmz}})^{(t-1)}]^{-1}+\bma_{n}[(\xi^{z\to f_{\bmz}}_{n})^{(t)}-(\xi^{z\to f_{\bmz}}_{n})^{(t-1)}]\bma_{n}^T)^{-1}.
    \end{split}
    \label{eq:spawc2667}
    \eeq
    The covariance matrix \eqref{eq:spawc2667} need to remain positive definite. By Lemma \ref{lemma:spawc262}, positive definiteness of \eqref{eq:spawc2667} is equivalent to the following constraint for $(\xi^{z\to f_{\bmz}}_{n})^{(t)}$
    \beq
        \begin{split}
            (\xi^{z\to f_{\bmz}}_{n})^{(t)}>-\frac{1}{\bma_{n}^T(\bmC^{\bmxh;f_{\bmz}})^{(t-1)}\bma_{n}}+(\xi^{z\to f_{\bmz}}_{n})^{(t-1)}\\
            \Leftrightarrow (\xi^{z\to f_{\bmz}}_{n})^{(t)}+(\xi^{f_{\bmz}\to z}_{n})^{(t)}>0,
        \end{split}
        \label{eq:spawc2668}
    \eeq
    where the second line follows from \eqref{eq:spawc2665}. Compare the update of $\mu^{z\to f_{\bmz}}_{n}$, \eqref{eq:spawc2656} with condition \eqref{eq:spawc2668}, and we find out that in ACEP, we can directly set $\mu^{z\to f_{\bmz}}_{n}=\mu^{f_{y}\to z}_{n}$ without checking the threshold.
\end{proof}
\begin{remark}
    The previous proposition indicates that in EPC, the belief $b^{f_{\bmz}}$ is always integrable. 
\end{remark}

Therefore, if we have the update order \eqref{eq:spawc2664}, the update for $\mu^{z\to f_{\bmz}}_{n}$ in both ACEP and EPC is:
\beq
    \mu^{z\to f_{\bmz}}_{n}(z_n)=\mu^{f_{y}\to z}_{n}(z_n)
\eeq
Otherwise, if we have different order, we apply the constrained KLD optimization in Section \ref{sect:constrainedKLDopt} based on \eqref{eq:spawc2660} and \eqref{eq:spawc2662p}.

\subsection{$x\to f_{x}$}
The belief at $x_k$ is
\beq
    b^{\xh}_{k}(x_{k})\propto \mu^{f_{x}\to x}_{k}(x_k)\mu^{f_{\bmz}\to x}_{k}.
\eeq
The belief is always integrable. The mean and variance of $b^{\xh}_{k}$ are
\beq
\begin{split}
    \tau^{\xh}_{k}=(\xi^{f_{x}\to x}_{k}+\xi^{f_{\bmz}\to x}_{k})^{-1}\\
    m^{\xh}_{k}=\tau^{\xh}_{k}(\nu^{f_{x}\to x}_{k}+\nu^{f_{\bmz}\to x}_{k}).
\end{split}
\eeq
To derive the threshold $\gamma^{x\to f_{x}}_{k}$, we observe the belief at $f_{x_k}$. If GMM prior is used, the sum of $\gamma^{x\to f_{x}}_{k}$ and the minimum component precision should be greater than zero.

\subsection{$x\to f_{\bmz}$}
To determine the threshold $\gamma^{x\to f_{\bmz}}_{k}$, we observe the rank one update. If the precision of message $\mu^{x\to f_{\bmz}}_{k}$ is updated from $\xi^{x\to f_{\bmz}}_{k}$ to $\gamma^{x\to f_{\bmz}}_{k}$, the new covariance matrix of belief $b^{f_{\bmz}}$ is updated by
\beq
    [(\bmC^{\bmxh;f_{\bmz}})^{-1}+\bme_{k}(\gamma^{x\to f_{\bmz}}_{k}-\xi^{x\to f_{\bmz}}_{k})\bme_{k}^T]^{-1},
    \label{eq:spawc2671}
\eeq
where $\bme_{k}$ denotes a unit vector with the $k$-th entry equal to one. Based on relation \eqref{eq:spawc2648p} and Lemma \ref{lemma:spawc262}, \eqref{eq:spawc2671} is positive definite iff. 
\beq
    \gamma^{x\to f_{\bmz}}_{k}>\xi^{x\to f_{\bmz}}_{k}-\xi^{\xh;f_{\bmz}}_{k}.
\eeq
\begin{proposition}
    In ACEP, with proper initialization (e.g. all positive message precision), if the update of $\mu^{f_{\bmz}\to x}_{k}$, $\mu^{f_{x}\to x}_{k}$, $\mu^{x\to f_{x}}_{k}$ and $\mu^{x\to f_{\bmz}}_{k}$ follows the following order:
    \beq
        \mu^{f_{\bmz}\to x}_{k} \rightarrow \mu^{x\to f_{x}}_{k} \rightarrow \mu^{f_{x}\to x}_{k}\rightarrow \mu^{x\to f_{\bmz}}_{k},
        \label{eq:spawc2673}
    \eeq
    the update of $\mu^{f_{\bmz}\to x}_{k}$ and $\mu^{x\to f_{\bmz}}_{k}$ does not need to evaluate the threshold.
\end{proposition}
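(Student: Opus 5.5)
We follow the proof of Proposition \ref{prop:spawc261} essentially verbatim. The only changes are that the rank-one direction $\bma_n$ is replaced by $\bme_k$, the factor $f_{y_n}$ by $f_{x_k}$, and the scalar $\bma_n^T\bmC^{\bmxh;f_{\bmz}}\bma_n$ by $\tau^{\xh;f_{\bmz}}_k=\bme_k^T\bmC^{\bmxh;f_{\bmz}}\bme_k$ from \eqref{eq:spawc2648p}. The argument is an induction on the number of passes $t$ through the chain \eqref{eq:spawc2673}. The induction hypothesis is that after pass $t-1$ we have $(\mu^{x\to f_{\bmz}}_k)^{(t-1)}=(\mu^{f_x\to x}_k)^{(t-1)}$, i.e.\ the last link was a plain copy, and that $(\bmC^{\bmxh;f_{\bmz}})^{(t-1)}\succ0$. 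The base case is supplied by the initialization with all positive precisions. In that case the beliefs are integrable, and after one unconstrained pass the copy relation holds.

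\emph{Step 1 (first link).} Because $b^{\xh;f_{\bmz}}_k$ is Gaussian, the unconstrained projection \eqref{eq:spawc2620} gives
\[
(\xi^{f_{\bmz}\to x}_k)^{(t)}+(\xi^{x\to f_{\bmz}}_k)^{(t-1)}=\frac{1}{\bme_k^T(\bmC^{\bmxh;f_{\bmz}})^{(t-1)}\bme_k}>0 ,
\]
which is positive since the covariance is positive definite. By the hypothesis, $(\xi^{x\to f_{\bmz}}_k)^{(t-1)}=(\xi^{f_x\to x}_k)^{(t-1)}$, and this is the current precision of $\mu^{f_x\to x}_k$ at that moment. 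Hence $(\xi^{f_{\bmz}\to x}_k)^{(t)}+\xi^{f_x\to x}_k>0$, which is exactly the threshold condition $\gamma^{f_{\bmz}\to x}_k=-\xi^{f_x\to x}_k+\epsilon$, up to the $\epsilon$ margin. So $\mu^{f_{\bmz}\to x}_k$ can be computed without evaluating the threshold.

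\emph{Step 2 (middle links).} The updates $\mu^{x\to f_x}_k$ and $\mu^{f_x\to x}_k$ are carried out with their ACEP thresholds. In particular, the update rule for $\mu^{f_x\to x}_k$, with threshold $-\xi^{f_{\bmz}\to x}_k+\epsilon$, guarantees
\[
(\xi^{f_x\to x}_k)^{(t)}+(\xi^{f_{\bmz}\to x}_k)^{(t)}>0 .
\]
Between the first and last links the message $\mu^{x\to f_{\bmz}}_k$ is untouched. Therefore $\bmC^{\bmxh;f_{\bmz}}$ is still $(\bmC^{\bmxh;f_{\bmz}})^{(t-1)}$.

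\emph{Step 3 (last link).} Copying the belief at $x_k$ gives $(\xi^{x\to f_{\bmz}}_k)^{(t)}=(\xi^{f_x\to x}_k)^{(t)}$, since the belief precision is the sum $\xi^{f_x\to x}_k+\xi^{f_{\bmz}\to x}_k$. The new covariance is the rank-one update \eqref{eq:spawc2671} along $\bme_k$. By Lemma \ref{lemma:spawc262}, this covariance is positive definite iff
\[
(\xi^{x\to f_{\bmz}}_k)^{(t)}>(\xi^{x\to f_{\bmz}}_k)^{(t-1)}-\frac{1}{\bme_k^T(\bmC^{\bmxh;f_{\bmz}})^{(t-1)}\bme_k}.
\]
Substituting the identity of Step 1 turns the right-hand side into $-(\xi^{f_{\bmz}\to x}_k)^{(t)}$. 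The condition is therefore $(\xi^{x\to f_{\bmz}}_k)^{(t)}+(\xi^{f_{\bmz}\to x}_k)^{(t)}>0$, and this is exactly the inequality obtained in Step 2. So the copy is feasible without evaluating the threshold. This also restores the induction hypothesis, $\mu^{x\to f_{\bmz}}_k=\mu^{f_x\to x}_k$ with positive definite covariance, for pass $t$.

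The main obstacle is the bookkeeping of which iteration index each quantity carries. The identity in Step 1 must use the covariance from before the chain, and that is valid only because $\mu^{x\to f_{\bmz}}_k$ is the last message in the chain to change $b^{f_{\bmz}}$. Updates for other indices $k'$ or $n$ between chains also change $\bmC^{\bmxh;f_{\bmz}}$. To handle them, we define $(\bmC^{\bmxh;f_{\bmz}})^{(t-1)}$ as the covariance immediately before the chain for this $k$, as in Proposition \ref{prop:spawc261}. Its positive definiteness is maintained by the ACEP constraints on those other updates.
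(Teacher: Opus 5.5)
Your proposal is correct and is exactly the analogue of the proof of Proposition~\ref{prop:spawc261} that the paper invokes: the same induction on the copy relation $\mu^{x\to f_{\bmz}}_k=\mu^{f_x\to x}_k$, the same unconstrained-projection identity for the first link, and the same application of Lemma~\ref{lemma:spawc262} to the rank-one update along $\bme_k$ for the last link. The one soft spot is your claim that ``one unconstrained pass'' establishes the copy relation, because in that first pass the copy relation does not yet hold and so cannot be used to show the first link needs no threshold; the paper does no better, since it omits the base case entirely, and the simplest fix is to read ``proper initialization'' as starting with $\mu^{x\to f_{\bmz}}_k=\mu^{f_x\to x}_k$ (positive precisions, $\bmC^{\bmxh;f_{\bmz}}\succ0$), so the hypothesis holds at $t=0$.
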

\begin{proof}
    The proof is analog to the one of Proposition \ref{prop:spawc261}.
\end{proof}

Therefore, if we have the update order \eqref{eq:spawc2673}, the update
\beq
    \mu^{x\to f_{\bmz}}_{k}(x_{k})=\mu^{f_{x}\to x}_{k}(x_{k})
\eeq
automatically guaranties that $b^{f_{\bmz}}$ is integrable.

\section{Useful Relations}

An important relation based on Gaussian reproduction lemma is
\beq
\begin{split}
    \cbN(\bmH\bmx|\bma, \bmA)\cbN(\bmx|\bmb, \bmB)\\
    =\cbN(\bmx|\bmc, \bmC)\cbN(\bma|\bmH\bmb, \bmH\bmB\bmH^T+\bmA),
\end{split}
\eeq
where
\beq
\begin{split}
    \bmC=(\bmH^T\bmA^{-1}\bmH+\bmB^{-1})^{-1}\\
    \bmc=\bmC(\bmH^T\bmA^{-1}\bma+\bmB^{-1}\bmb).
\end{split}
\eeq

\begin{lemma} \label{lemma:spawc261}
    Let $\bma\in \mathbb{R}^{K}$, $\bmm_{\bmx}\in \mathbb{R}^{K}$, and $\bmC_{\bmx}\in\mathbb{R}^{K\times K} $ be symmetric. If 
    \beq
        \int \int \cbN(\bmx|\bmm_\bmx, \bmC_{\bmx})\cbN(z|m_{z}, \tau_z)\delta(z-\bma^T\bmx) dz d\bmx
        \label{eq:spawc26642}
        \eeq
    is integrable, then

    \beq
    \begin{split}
        \int \cbN(\bmx|\bmm_\bmx, \bmC_{\bmx})\cbN(z|m_{z}, \tau_z)\delta(z-\bma^T\bmx) d\bmx\\
            =\cbN(z| m_{\zh}, \tau_{\zh})\sqrt{(2\pi)^{K-1} (\bma^T\bmC_{\bmx}\bma)^{-1}\det(\bmC_{\bmx})}\\
            \cdot\cbN(0|m_{z}-\bma^T\bmm_{\bmx}, \bma^T\bmC_{\bmx}\bma+\tau_{z}),
    \end{split}
    \label{eq:spawc2642}
    \eeq
    where
    \beq
        \begin{split}
            \tau_{\zh}=[(\bma^T\bmC_{\bmx}\bma)^{-1}+\tau_{z}^{-1}]^{-1}\\
            m_{\zh}=\tau_{\zh}[(\bma^T\bmC_{\bmx}\bma)^{-1}\bma^T\bmm_{\bmx}+\tau_{z}^{-1}m_{z}]
        \end{split}
    \eeq
\end{lemma}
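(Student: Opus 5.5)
The plan is to do the $\bmx$-integral in \eqref{eq:spawc2642} directly, by splitting $\bmx$ into its component along $\bma$ and its component in the hyperplane $\bma^{\perp}$. I will not use a probabilistic ``marginal of $\bma^T\bmx$'' argument, because $\bmC_{\bmx}$ is only assumed symmetric, not positive definite.

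\textbf{Step 1 (parametrize the fibre).} Take an orthonormal basis $\bmU\in\mathbb{R}^{K\times(K-1)}$ of $\bma^{\perp}$. For fixed $z$, every $\bmx$ with $\bma^T\bmx=z$ can be written as $\bmx=\bmx_0(z)+\bmU\bm w$. Here I choose the particular point
\beq
\bmx_0(z)=\bmm_{\bmx}+\bmC_{\bmx}\bma(\bma^T\bmC_{\bmx}\bma)^{-1}(z-\bma^T\bmm_{\bmx}),
\eeq
which is the conditional-mean point. With this choice the quadratic form $(\bmx-\bmm_{\bmx})^T\bmC_{\bmx}^{-1}(\bmx-\bmm_{\bmx})$ splits without a cross term into
\beq
(z-\bma^T\bmm_{\bmx})^2(\bma^T\bmC_{\bmx}\bma)^{-1}+\bm w^T\bmU^T\bmC_{\bmx}^{-1}\bmU\bm w.
\eeq
The cross term vanishes because $\bmU^T\bmC_{\bmx}^{-1}\bmC_{\bmx}\bma=\bmU^T\bma=0$. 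The delta function removes one dimension, with Jacobian $1/\|\bma\|$, and what remains is $\int d\bm w$ over $\mathbb{R}^{K-1}$.

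\textbf{Step 2 (the $\bm w$-integral and the constant).} The integral $\int\exp[-\bm w^T\bmU^T\bmC_{\bmx}^{-1}\bmU\bm w]\,d\bm w$ is a Gaussian integral. Its value is proportional to $\det(\bmU^T\bmC_{\bmx}^{-1}\bmU)^{-1/2}$, provided $\bmU^T\bmC_{\bmx}^{-1}\bmU\succ0$. To simplify it I will use the Schur-complement (block determinant) identity for the orthogonal matrix $[\bmU,\ \bma/\|\bma\|]$:
\beq
\det(\bmU^T\bmC_{\bmx}^{-1}\bmU)=\det(\bmC_{\bmx}^{-1})\,\frac{\bma^T\bmC_{\bmx}\bma}{\|\bma\|^2}.
\eeq
Combining this with the $1/\|\bma\|$ Jacobian cancels the $\|\bma\|$ factors and gives the prefactor $\sqrt{(\cdot)^{K-1}(\bma^T\bmC_{\bmx}\bma)^{-1}\det(\bmC_{\bmx})}$. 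The power of $\pi$ versus $2\pi$ depends on the normalization of $\cbN$, and I will track it carefully. At this point the result is $\mathrm{const}\cdot\cbN(z|\bma^T\bmm_{\bmx},\bma^T\bmC_{\bmx}\bma)$.

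\textbf{Step 3 (combine with the $z$-factor).} I multiply by $\cbN(z|m_z,\tau_z)$ and apply the Gaussian reproduction lemma of this section in the scalar case with $\bmH=1$. This gives
\beq
\cbN(z|m_{\zh},\tau_{\zh})\,\cbN(m_z|\bma^T\bmm_{\bmx},\bma^T\bmC_{\bmx}\bma+\tau_z),
\eeq
with $\tau_{\zh},m_{\zh}$ exactly as stated. The last factor can be rewritten as $\cbN(0|m_z-\bma^T\bmm_{\bmx},\cdot)$.

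\textbf{Main obstacle.} The delicate step is justifying Step 2 under the weak hypothesis, namely only that the double integral \eqref{eq:spawc26642} is finite, with $\bmC_{\bmx}$ merely symmetric. I would argue as follows. By Tonelli, finiteness of \eqref{eq:spawc26642} forces the inner $\bm w$-integral to be finite for almost every $z$. Since the integrand in $\bm w$ is the exponential of a fixed quadratic form (independent of $z$) times a $z$-dependent constant, this forces $\bmU^T\bmC_{\bmx}^{-1}\bmU\succ0$. It also requires $\bma^T\bmC_{\bmx}\bma\neq0$, which is needed for $\bmx_0(z)$ to be defined, and I would derive it from invertibility of the restricted form via the same Schur identity. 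Once this is established, Steps 1 to 3 are pure algebra. The sign of $\bma^T\bmC_{\bmx}\bma$ and of $\tau_z$ may still be individually negative, which is precisely the non-integrable-message regime the paper allows, so no positivity of those quantities should be assumed in Step 3.
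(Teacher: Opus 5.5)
Your proof is correct. It shares the paper's skeleton: an orthonormal split of $\bmx$ along $\bma$ and $\bma^{\perp}$, a $(K-1)$-dimensional Gaussian integral over the hyperplane, and the block-determinant identity for $[\bma/\|\bma\|,\ \bmU]$. It organizes the computation differently, though.

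\textbf{The paper's route.} The paper centres the hyperplane coordinate at $\bma\theta_1$ and completes the square in $\bmthetab$ around the $\theta_1$-dependent mean $\bmm_{\bmthetab}(\theta_1)$. It then has to identify the leftover quadratic, linear and constant coefficients in $z$ one at a time, carrying $\tau_z^{-1}$ throughout. The quadratic coefficient comes from the Schur complement of $\bmQ$; the other two come from the null-space arguments for $\bmb_{4}$ and $\bmb_{5}$.

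\textbf{Your route.} Your $\bmx_0(z)$ is exactly that completed-square centre $\bma\theta_1+\bmU\bmm_{\bmthetab}(\theta_1)$, written in closed form. Both are the unique point of the fibre satisfying $\bmU^T\bmC_{\bmx}^{-1}(\bmx-\bmm_{\bmx})=0$. Because of this choice:
\begin{itemize}
\item the cross term vanishes by $\bmU^T\bma=0$;
\item $\cbN(z|\bma^T\bmm_{\bmx},\bma^T\bmC_{\bmx}\bma)$ appears at once;
\item the Schur identity is needed only for the determinant;
\item $\tau_z$ is absorbed at the end by the scalar reproduction lemma.
\end{itemize}

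\textbf{Trade-off.} The paper's route is mechanical and needs no guess for the centre. Yours is much shorter. It also explains how the integrability hypothesis is used, whereas the paper simply asserts $\bmC_{\bmthetab}\succ0$ and never checks $\bma^T\bmC_{\bmx}\bma\neq0$. Your $\pi$ versus $2\pi$ caveat is justified too. The stated $(2\pi)^{K-1}$ matches the $\exp[-\tfrac12(\cdot)]$ convention used inside the paper's proof, not the earlier definition of $\cbN$, which has no $\tfrac12$.

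\textbf{An ordering point to fix.} As written, your argument is circular. You use the $\bmx_0(z)$ parametrization to obtain $\bmU^T\bmC_{\bmx}^{-1}\bmU\succ0$, and then use that to justify $\bma^T\bmC_{\bmx}\bma\neq0$, which $\bmx_0(z)$ already presupposes. The fix is as follows.
\begin{itemize}
\item First use the base point $z\bma/\|\bma\|^2$. The exponent in $\bm w$ then has the same quadratic part plus a $z$-dependent linear term. Finiteness still forces positive definiteness: diagonalize and factor into one-dimensional integrals, each of which diverges for a nonpositive eigenvalue.
\item The same block-determinant identity then gives $\bma^T\bmC_{\bmx}\bma=\|\bma\|^2\det(\bmC_{\bmx})\det(\bmU^T\bmC_{\bmx}^{-1}\bmU)\neq0$.
\item A translation in $\bm w$ then takes you to $\bmx_0(z)$.
\end{itemize}
Similarly, finiteness of the remaining $z$-integral gives $(\bma^T\bmC_{\bmx}\bma)^{-1}+\tau_z^{-1}>0$. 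This is what makes $\tau_{\zh}$ well defined and ensures $\bma^T\bmC_{\bmx}\bma+\tau_z\neq0$ in Step 3.
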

\begin{proof}
    We compute the integration by substitution. Define 
    \beq
        \bmU=\begin{bmatrix}
            \bmu_{1} &\dots &\bmu_{K-1}
        \end{bmatrix}\in \mathbb{R}^{K\times (K-1)},
    \eeq
    such that $\forall k,k'\in [1, K-1]: \bma^T\bmu_k=0$ and $\bmu_k^T\bmu_k'=\delta_{kk'}$, where $\delta_{kk'}$ denotes Kronecker delta that evaluates to zero if $k'\neq k$ and evaluates to one if $k=k'$.

    We construct a full rank basis matrix 
    \beq
        \bmH=\begin{bmatrix}
            \bma &\bmU
        \end{bmatrix}\in \mathbb{R}^{K\times K}.
    \eeq

    We substitute $\bmx$ by $\bmtheta$, where $\bmx=\bmH\bmtheta$ and thus, $d\bmx=|\det(\bmH)|d\bmtheta$.

    The integral in \eqref{eq:spawc2642} can be computed as
    \beq
        \begin{split}
            \int \cbN(\bmx|\bmm_\bmx, \bmC_{\bmx})\cbN(z|m_{z}, \tau_z)\delta(z-\bma^T\bmx) d\bmx\\
            =\int \int \cbN(\bma\theta_1+\bmU\bmthetab|\bmm_\bmx, \bmC_{\bmx})d\bmthetab\\
            \cdot\cbN(z|m_{z}, \tau_z)\delta(z-|\bma|^2\theta_1) |\det(\bmH)|d\theta_1\\
            =\int\int \cbN(\bmthetab|\bmm_{\bmthetab}(\theta_1), \bmC_{\bmthetab})d\bmthetab \exp[\frac{1}{2}\bmm_{\bmthetab}^T(\theta_1)\bmC_{\bmthetab}^{-1}\bmm_{\bmthetab}(\theta_1)]\\
            \cdot\exp[-\frac{1}{2}(\bmm_{\bmx}-\bma\theta_1)^T\bmC_{\bmx}^{-1}(\bmm_{\bmx}-\bma\theta_1)]\\
            \cbN(z|m_{z}, \tau_z)\delta(z-|\bma|^2\theta_1)
            \cdot|\det(\bmH)|d\theta_1\\
            =\sqrt{(2\pi)^{K-1}\det(\bmC_{\bmthetab})}\int \exp[\frac{1}{2}\bmm_{\bmthetab}^T(\theta_1)\bmC_{\bmthetab}^{-1}\bmm_{\bmthetab}(\theta_1)]\\
            \cdot\exp[-\frac{1}{2}(\bmm_{\bmx}-\bma\theta_1)^T\bmC_{\bmx}^{-1}(\bmm_{\bmx}-\bma\theta_1)]\\
            \cbN(z|m_{z}, \tau_z)\delta(z-|\bma|^2\theta_1)
            \cdot|\det(\bmH)|d\theta_1
        \end{split}
        \label{eq:spawc2646}
    \eeq
    where
    \beq
    \begin{split}
        \bmC_{\bmthetab}=(\bmU^T\bmC_{\bmx}^{-1}\bmU)^{-1}\\
        \bmm_{\bmthetab}(\theta_1)=\bmC_{\bmthetab}[\bmU^T\bmC_{\bmx}^{-1}(\bmm_{\bmx}-\bma\theta_1)]
        \end{split}
    \eeq
    The integral of \eqref{eq:spawc26642} exists, which implies that we have positive definite matrix
    \beq
        \bmC_{\bmthetab}\succ 0.
    \eeq
    From this point, we only have scalar variables $\theta_1$ and $z$.

    Continuing from \eqref{eq:spawc2646}, we have
    \begin{align}
        \int \cbN(\bmx|\bmm_\bmx, \bmC_{\bmx})\cbN(z|m_{z}, \tau_z)\delta(z-\bma^T\bmx) d\bmx \nonumber\\
        =\frac{|\det(\bmH)|\sqrt{(2\pi)^{K-1}\det(\bmC_{\bmthetab})}}{|\bma|^2}\label{eq:spawc26determinant}\\
        \cdot\exp[-\frac{1}{2}z^T(\bmb_{3}^T\bmC_{\bmx}^{-1}\bmb_{3}-\bmb_{1}^T\bmC_{\bmthetab}^{-1}\bmb_{1}+\tau_{z}^{-1})z] \label{eq:spawc2649}\\
        \cdot \exp[z^T(\bmb_{3}^T\bmC_{\bmx}^{-1}\bmm_{\bmx}-\bmb_{1}^T\bmC_{\bmthetab}^{-1}\bmb_{2}+\tau_{z}^{-1}m_{z})]\label{eq:spawc2650}\\
        \cdot \exp[-\frac{1}{2}(\bmm_{\bmx}^T\bmC_{\bmx}^{-1}\bmm_{\bmx}-\bmb_{2}^T\bmC_{\bmthetab}^{-1}\bmb_{2}+m_{z}^T\tau_{z}^{-1}m_{z})], \label{eq:spawc2651}
    \end{align}
    where
    \beq
        \begin{split}
            \bmb_{1}=\bmC_{\bmthetab}\bmU^T\bmC_{\bmx}^{-1}\frac{\bma}{|\bma|^2}\\
            \bmb_{2}=\bmC_{\bmthetab}\bmU^T\bmC_{\bmx}^{-1}\bmm_{\bmx}\\
            \bmb_{3}=\frac{\bma}{|\bma|^2};
        \end{split}
    \eeq
    We try to simplify \eqref{eq:spawc2649}. Let 
    \beq
    \begin{split}
        \bmQ=\begin{bmatrix}
            \bma^T/|\bma|\\
            \bmU^T
        \end{bmatrix}\bmC_{\bmx}^{-1}\begin{bmatrix}
            \bma/|\bma| &\bmU
        \end{bmatrix}
    \end{split}
    \label{eq:spawc2654}
    \eeq
    be a matrix of 2 blocks by 2 blocks. Since $\begin{bmatrix}
            \bma/|\bma| &\bmU
        \end{bmatrix}$ is a unitary matrix, we have
    \beq
        \bmQ^{-1}=\begin{bmatrix}
            \bma^T/|\bma|\\
            \bmU^T
        \end{bmatrix}\bmC_{\bmx}\begin{bmatrix}
            \bma/|\bma| &\bmU
        \end{bmatrix}
    \eeq
    According to Schur complement and matrix inversion lemma, the first two terms of the quadratic coefficient in \eqref{eq:spawc2649} can be verified to be
    \beq
    \begin{split}
        \bmb_{3}^T\bmC_{\bmx}^{-1}\bmb_{3}-\bmb_{1}^T\bmC_{\bmthetab}^{-1}\bmb_{1}=\frac{1}{|\bma|^2}([\bmQ^{-1}]_{11})^{-1}=(\bma^{T}\bmC_{\bmx}\bma)^{-1}
    \end{split}
    \label{eq:spawc2655}
    \eeq
    Now we try to simplify \eqref{eq:spawc2650}. The first two constant terms can be expanded as
    \beq
        \begin{split}
            \bmb_{3}^T\bmC_{\bmx}^{-1}\bmm_{\bmx}-\bmb_{1}^T\bmC_{\bmthetab}^{-1}\bmb_{2}=\bmb_{4}^T\bmm_{\bmx}.
        \end{split}
    \eeq
    where
    \beq
        \begin{split}
            \bmb_{4}^T=\frac{\bma^T}{|\bma|^2}\bmC_{\bmx}^{-1}-\frac{\bma^T}{|\bma|^2}\bmC_{\bmx}^{-1}\bmU(\bmU^T\bmC_{\bmx}^{-1}\bmU)^{-1}\bmU^T\bmC_{\bmx}^{-1}.
        \end{split}
        \label{eq:spawc2657}
    \eeq
    It has the following two properties
    \begin{itemize}
        \item $\bmb_{4}^T\frac{\bma}{|\bma|^2}$ is the same as expanding the left hand side of \eqref{eq:spawc2655}. Therefore,
        \beq
            \bmb_{4}^T\frac{\bma}{|\bma|^2}=(\bma^{T}\bmC_{\bmx}\bma)^{-1}.
            \label{eq:spawc2658}
        \eeq

        \item It is in the null space of $\bmU$:
        \beq
            \bmb_{4}^T\bmU=0
            \label{eq:spawc2659}
        \eeq
    \end{itemize}

    Therefore, from \eqref{eq:spawc2659}, we know $\bmb_{4}$ must be a scalar multiple of $\bma$. By \eqref{eq:spawc2658}, we have
    \beq
        \bmb_{4}^T=(\bma^{T}\bmC_{\bmx}\bma)^{-1}\bma^T.
    \eeq

    We investigate \eqref{eq:spawc2651}, the first two terms can be expanded as
    \beq
        \bmm_{\bmx}^T\bmC_{\bmx}^{-1}\bmm_{\bmx}-\bmb_{2}^T\bmC_{\bmthetab}^{-1}\bmb_{2}=\bmb_{5}^T\bmm_{\bmx},
    \eeq
    where
    \beq
        \bmb_{5}^T=\bmm_{\bmx}^T\bmC_{\bmx}^{-1}-\bmm_{\bmx}^T\bmC_{\bmx}^{-1}\bmU(\bmU^T\bmC_{\bmx}^{-1}\bmU)^{-1}\bmU^T\bmC_{\bmx}^{-1}
        \label{eq:spawc2662}
    \eeq

    Similar as before, $\bmb_{5}^T$ has the following two properties
    \begin{itemize}
        \item Compare \eqref{eq:spawc2657} with \eqref{eq:spawc2662}, we can verify that
        \beq
            \bmb_{5}^T\frac{\bma}{|\bma|^2}=(\bmb_{4}^T\bmm_{\bmx})^T=\bmm_{\bmx}^T\bma(\bma^{T}\bmC_{\bmx}\bma)^{-1}
            \label{eq:spawc2663}
        \eeq
        \item It is also in the null space of $\bmU$
        \beq
            \bmb_{5}^T\bmU=0
        \eeq
    \end{itemize}
    Therefore, $\bmb_{5}^T$ must be a scalar multiple of $\bma$. From \eqref{eq:spawc2663}, we have
    \beq
        \bmb_{5}^T=\bmm_{\bmx}^T\bma(\bma^{T}\bmC_{\bmx}\bma)^{-1}\bma^T
    \eeq

    Finally, we examine the determinant in \eqref{eq:spawc26determinant}. 
    Since 
    \beq
        \bmH=\begin{bmatrix}
            \bma/|\bma| &\bmU
        \end{bmatrix}\begin{bmatrix}
            |\bma| &\\
            &\bmI
        \end{bmatrix},
    \eeq
    we have $|\det(\bmH)|=|\bma|$. 

    From \eqref{eq:spawc2654}, we have
    \beq
        \bmC_{\bmthetab}=([\bmQ]_{22})^{-1},
    \eeq
    where $[\bmQ]_{22}$ denote the matrix at the second block row and the second block column of $\bmQ$.
    Based on block determinant property and \eqref{eq:spawc2655}, we have
    \beq
        \det(\bmC_{\bmthetab})=\det(\bmC_{\bmx})(\bma^T\bmC_{\bmx}\bma)^{-1}|\bma|^2
    \eeq

    Substitute the discussion results into \eqref{eq:spawc26determinant} - \eqref{eq:spawc2651} and we have
    \beq
        \begin{split}
            \int \cbN(\bmx|\bmm_\bmx, \bmC_{\bmx})\cbN(z|m_{z}, \tau_z)\delta(z-\bma^T\bmx) d\bmx\\
            =\sqrt{(2\pi)^{K-1} (\bma^T\bmC_{\bmx}\bma)^{-1}\det(\bmC_{\bmx})}\\
            \cdot\cbN(0|m_{z}-\bma^T\bmm_{\bmx}, \bma^T\bmC_{\bmx}\bma+\tau_{z})\\
            \cdot\cbN(z| m_{\zh}, \tau_{\zh}),
        \end{split}
    \eeq
    where 
    \beq
        \begin{split}
            \tau_{\zh}=[(\bma^T\bmC_{\bmx}\bma)^{-1}+\tau_{z}^{-1}]^{-1}\\
            m_{\zh}=\tau_{\zh}[(\bma^T\bmC_{\bmx}\bma)^{-1}\bma^T\bmm_{\bmx}+\tau_{z}^{-1}m_{z}]
        \end{split}
    \eeq

\end{proof}

\begin{lemma}
\label{lemma:spawc262}
    Let $\bmC^{-1}\in \mathbb{R}^{K\times K}$ be a symmetric positive definite matrix, then $(\bmC^{-1}+\bma\tau^{-1}\bma^T)^{-1}$ is also positive definite iff. $\tau^{-1}>-\frac{1}{\bma^T\bmC\bma}$.
\end{lemma}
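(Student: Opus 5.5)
The plan is to reduce the statement to a scalar inequality via the matrix determinant lemma and Sherman--Morrison. Write $\bmB=\bmC^{-1}+\bma\tau^{-1}\bma^T$. The matrix $(\bmB)^{-1}$ is positive definite iff $\bmB$ is positive definite, since the eigenvalues of an inverse are reciprocals. So it suffices to characterize when the rank-one perturbation $\bmB$ of the positive definite matrix $\bmC^{-1}$ stays positive definite. I assume $\bma\neq 0$; otherwise $\bma^T\bmC\bma=0$ and the condition is vacuous.

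First I would use a congruence. Since $\bmC\succ 0$, it has a symmetric square root $\bmC^{1/2}$. Then
\[
\bmC^{1/2}\bmB\bmC^{1/2}=\bmI+\tau^{-1}\bmb\bmb^T, \qquad \bmb=\bmC^{1/2}\bma .
\]
Congruence preserves positive definiteness, so $\bmB\succ 0$ iff $\bmI+\tau^{-1}\bmb\bmb^T\succ 0$.

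Second, I would read off the eigenvalues of $\bmI+\tau^{-1}\bmb\bmb^T$. On the orthogonal complement of $\bmb$ the eigenvalue is $1$. Along $\bmb$ it is $1+\tau^{-1}|\bmb|^2=1+\tau^{-1}\bma^T\bmC\bma$. Hence positive definiteness holds iff $1+\tau^{-1}\bma^T\bmC\bma>0$. Since $\bma^T\bmC\bma>0$, this is equivalent to $\tau^{-1}>-1/(\bma^T\bmC\bma)$, which proves both directions at once.

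As an alternative check, I would note that $\det\bmB=\det(\bmC^{-1})(1+\tau^{-1}\bma^T\bmC\bma)$. The eigenvalues of $\bmB$ interlace those of $\bmC^{-1}$ under a rank-one update, so at most one eigenvalue can become nonpositive, and the sign of the determinant detects it. This is consistent with the congruence argument.

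The main obstacle is keeping the equivalence two-sided rather than only sufficient, and the congruence argument handles that cleanly. The only remaining care is the degenerate case $\bma=0$, which should be excluded or treated separately so the quotient $1/(\bma^T\bmC\bma)$ is defined. I would also remark that, when the inverse exists, Sherman--Morrison gives
\[
(\bmB)^{-1}=\bmC-\frac{\bmC\bma\bma^T\bmC}{\tau+\bma^T\bmC\bma}.
\]
This matches the form in which the lemma is used in the proof of Proposition~\ref{prop:spawc261}, namely with $\tau^{-1}$ equal to the precision change and $\bma^T\bmC\bma=\tau^{\zh;f_{\bmz}}_n$.
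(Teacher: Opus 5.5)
Your proof is correct and matches the paper's argument: the paper also passes by congruence to $\bmI+\bmb\tau^{-1}\bmb^T$ with $\bmb=\bmC^{1/2}\bma$, then diagonalizes it in the orthonormal basis $[\bmb/|\bmb|,\ \bmU]$ to read off the single nontrivial eigenvalue $1+|\bmb|^2\tau^{-1}$. Your explicit exclusion of $\bma=0$ (which the paper skips silently while dividing by $|\bmb|$) and your determinant/interlacing cross-check are valid but inessential additions.
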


\begin{proof}
    The matrix inversion does not change the positive definiteness. Therefore, we investigate $\bmC^{-1}+\bma\tau^{-1}\bma^T$. Since $\bmC^{-1}$ is positive definite and symmetric, we factorize it by
    \beq
        \bmC^{-1}=\bmC^{-1/2}\bmC^{-T/2}, 
    \eeq
    such that $\bmC^{-1/2}$ is also a symmetric positive definite matrix.
    Therefore, we have
    \beq
    \begin{split}
        \bmC^{-1}+\bma\tau^{-1}\bma^T=\bmC^{-1/2}(\bmI+\bmb\tau^{-1}\bmb^T)\bmC^{-T/2}
    \end{split}
    \label{eq:spawc2691}
    \eeq
    where
    \beq
        \bmb=\bmC^{1/2}\bma.
    \eeq
    Since $\bmC^{-1/2}$ is full-rank, \eqref{eq:spawc2691} is positive definite iff. $(\bmI+\bmb\tau^{-1}\bmb^T)$ is positive definite.
    We construct a unitary matrix $\bmU=\begin{bmatrix}\bmu_1 &\dots &\bmu_{K-1} \end{bmatrix}\in \mathbb{R}^{K\times(K-1)}$, such that $\forall k,k'\in [1, K-1]: \bmb^T\bmu_k=0$ and $\bmu_k^T\bmu_k'=\delta_{kk'}$, where $\delta_{kk'}$ denotes Kronecker delta that evaluates to zero if $k'\neq k$ and evaluates to one if $k=k'$.

    Another square unitary matrix can be constructed by
    \beq
        \bmH=\begin{bmatrix}
            \frac{\bmb}{|\bmb|} &\bmU
        \end{bmatrix}\in \mathbb{R}^{K\times K}.
    \eeq
    Therefore,
    \beq
    \begin{split}
        \bmI+\bmb\tau^{-1}\bmb^T=\bmH\begin{bmatrix}
            1+|\bmb|^2\tau^{-1} &\\
            &\bmI
        \end{bmatrix}\bmH^T.
    \end{split}
    \label{eq:spawc2694}
    \eeq
    From \eqref{eq:spawc2694}, $(\bmC^{-1}+\bma\tau^{-1}\bma^T)^{-1}$ is positive definite iff.
    \beq
    \begin{split}
        1+|\bmb|^2\tau^{-1}>0\Leftrightarrow\tau^{-1}>-\frac{1}{\bma^T\bmC\bma}
    \end{split}
    \eeq
\end{proof}
\begin{remark}
    Lemma \ref{lemma:spawc262} implies that it is easier to use precision instead of variance when deriving. When using the precision representation, the feasible set is $\tau^{-1}\in (-\frac{1}{\bma^T\bmC\bma}, +\infty)$. However, in variance representation, this condition is equivalent to 
    $\tau\in (-\infty, \bma^T\bmC\bma)\cup(0, +\infty)$.
\end{remark}

\section{Gaussian Mixture Model (GMM)}
For simulations, we use GMM prior $p(x_k)$ and likelihood $p(y_n|z_n)$.

Consider the following belief
\beq
    b(\theta)\propto f(\theta)\mu(\theta),
\eeq
where 
\beq
    f(\theta)=\sum_{s}w_{s}\cbN(\theta|m_{s}, \tau_{s})=\sum_{s}w_{s}\cbM(\theta|\nu_{s}, \xi_{s})
\eeq
and
\beq
    \mu(\theta)=\cbN(\theta|m, \tau)=\cbM(\theta| \nu, \xi).
\eeq
Therefore, 
\beq
    b(\theta)=\sum_{s}w_s\cbN(0|m_s-m, \tau_s+\tau) \cbM(\theta|\nu_s+\nu, \xi_s+\xi).
\eeq
Since $b(\theta)$ is integrable, we have $\forall s: \xi_s+\xi>0$. We can transform the factor containing $\theta$ to normalized Gaussian
\beq
    \cbM(\theta|\nu_s+\nu, \xi_s+\xi)=\sqrt{2\pi \tau_{\thetah|s}}\mathcal{N}(\theta|m_{\thetah|s}, \tau_{\thetah|s}),
\eeq
where 
\beq
    \begin{split}
        \tau_{\thetah|s}=\frac{1}{\xi_s+\xi}\\
        m_{\thetah|s}=\frac{\nu_{s}+\nu}{\xi_{s}+\xi}.
    \end{split}
\eeq
We rewrite the belief as
\beq
    b(\theta)\propto \sum_{s} \eta_{s} \mathcal{N}(\theta|m_{\thetah|s}, \tau_{\thetah|s}),
\eeq
where 
\beq
    \eta_{s}=w_s\cbN(0|m_s-m, \tau_s+\tau)\sqrt{2\pi \tau_{\thetah|s}}.
\eeq
We can normalize the belief. Define
\beq
    \rho_s=\frac{\eta_s}{\sum_{s'}\eta_{s'}},
\eeq
and we have
\beq
    b(\theta)=\sum_s \rho_s \mathcal{N}(\theta|m_{\thetah|s}, \tau_{\thetah|s}).
\eeq
The belief mean and variance are
\beq
    \begin{split}
        &m_{\thetah}=\E_{b}[\theta]=\sum_s \rho_s m_{\thetah|s}\\
        &\tau_{\thetah}=\var_{b}[\theta]=\left[\sum_s \rho_s(\tau_{\thetah|s}+|m_{\thetah|s}|^2)\right]-|m_{\thetah}|^2.
    \end{split}
\eeq

\section{Simulation Results}
We consider a system of size $N\times K=8\times 12$. The prior distribution of each symbol is independent and follows GMM with exponential decay:
\beq
\begin{split}
    p(x_{k})=\frac{1}{2}\mathcal{N}(x_k|2^{-n+1}, 0.1\times 2^{-2(n-1)})\\
    +\frac{1}{2}\mathcal{N}(x_{k}|-2^{-n+1}, 0.1\times 2^{-2(n-1)}).
\end{split}
\eeq
Each entry in the measurement matrix $\bmA$ is generated from i.i.d. Gaussian. Additive white Gaussian noise is assumed, whose power is adjusted such that the SNR is 15 dB.  We generate 1000 independent realizations.

\begin{figure}[t]
    \centering
    \includegraphics[width=0.48\textwidth]{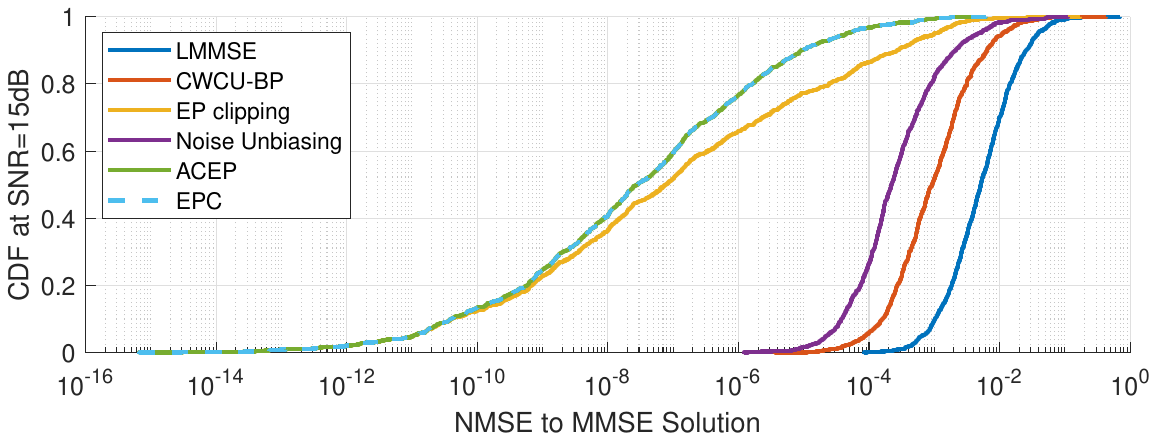}
    \vspace{-2mm}
    \caption{CDF of NMSE} 
    \vspace{-2mm}
    \label{fig:results_var}
    \vspace{-4mm}
\end{figure}

Besides our proposed methods, we also simulated the results for LMMSE, CWCU-BP \cite{huemer2014cwcu, Triki:asilo05}, EP clipping \cite{ngo2020multi}, and Noise Unbiasing \cite{fischer2020vamp}. The simulation results show that the performance of our proposed methods surpasses the existing attempts to handle the non-integrable beliefs. This indicates that in order to have more accurate estimates, we have to accept the non-integrable messages.

\section{Conclusions}

We proposed two EP frameworks that avoid non-integrable beliefs without shrinking the feasible set. Both methods permits non-integrable messages and factors. The proposed methods are then applied to GLM. Based on our analysis, we find out that if EPC is used, the only beliefs that require checking are $b^{f_{y}}_{n}$ and $b^{f_{x}}_{k}$. On the other hand, if ACEP is used, only the messages to and from factor nodes $f_{y_n}$ and $f_{x_k}$ need to be compared with the thresholds to guarantee integrable beliefs. The simulation results indicate that in order to achieve better results, we must accept non-integrable messages.

{\bf Acknowledgements}
EURECOM's research is partially supported by its industrial members:
ORANGE, BMW, SAP, iABG,  Norton LifeLock, by the French PEPR-5G projects PERSEUS and YACARI, the EU H2030 project CONVERGE, and by a Huawei France funded Chair towards Future Wireless Networks.



{
\bibliographystyle{IEEEtran}
\bibliography{asilomar23,SBL_ref}
}


\end{document}